\def\ARXIV{0}
\tikzset{%
  every neuron/.style={
    circle,
    draw,
    minimum size=0.5cm
  },
  neuron missing/.style={
    draw=none, 
    scale=2,
    text height=0.333cm,
    execute at begin node=\color{black}$\vdots$
  },
}
\newtheorem{theorem}{Theorem}
\newtheorem{lemma}{Lemma}
\newtheorem{assumption}{Assumption}
\begin{document}

\title{\LARGE \bf
  Function Gradient 
  Approximation with Random Shallow ReLU Networks with Control Applications
}

\author{
Andrew Lamperski and 
  Siddharth Salapaka 
  \thanks{This work was supported in part by NSF ECCS 2412435}
  \thanks{A. Lamperski is with the department of Electrical and Computer Engineering, University  of Minnesota, Minneapolis, MN 55455, USA {\tt\small alampers@umn.edu}}
  \thanks{S. Salapaka is  with the department of Electrical and Computer Engineering, University of Illinois Urbana-Champaign, Urbana, IL 61801, USA {\tt\small svs9@illinois.edu}}
}

\maketitle
\thispagestyle{empty}
\pagestyle{empty}

\begin{abstract}
  Neural networks are widely used to approximate unknown functions in control. 
  A common neural network architecture uses a single hidden layer (i.e. a shallow network), in which the input parameters are fixed in advance and only the output parameters are trained.
  The typical formal analysis asserts that if output parameters exist to approximate the unknown function with sufficient accuracy, then desired control performance can be achieved.
  A long-standing theoretical gap was that no conditions existed to guarantee that, for the fixed input parameters,  required accuracy could be obtained by training the output parameters.
  Our recent work has partially closed  this gap by demonstrating that if input parameters are chosen randomly, then for any sufficiently smooth function, with high-probability there are output parameters resulting in $O((1/m)^{1/2})$ 
   approximation errors, where $m$ is the number of neurons. 
  However, some applications, notably continuous-time value function approximation, require that the network approximates the both the
  unknown function and its gradient with sufficient accuracy. 
  In this paper, we show that randomly generated input parameters and trained output parameters result in gradient errors of $O\left((\log(m)/m)^{1/2}\right)$, and additionally, improve the constants from our prior work.
  We show how to apply the result to policy evaluation problems. 
\end{abstract}

\section{Introduction}

Neural networks have wide applications in control systems. In adaptive control they are commonly used to model unknown nonlinearities \cite{lavretsky2013robust}. In reinforcement learning and dynamic programming, they are used to approximate value functions and to parameterize control strategies \cite{vrabie2013optimal,powell2007approximate,sutton2018reinforcement}.

Our recent work in \cite{lamperski2024approximation} partially closes a long-standing theoretical gap in neural network methods for control. 
Shallow neural networks are used to approximate unknown functions in a variety of control contexts \cite{vrabie2013optimal,kamalapurkar2018reinforcement,greene2023deep,makumi2023approximate,cohen2023safe,kokolakis2023reachability,sung2023robust,lian2024inverse}.
Shallow neural networks can be expressed succinctly as $\Theta\Phi(Wx+b)$, where $(W,b)$ are the input parameters, $\Phi$ is a vector of nonlinear functions, and $\Theta$ is a matrix of output parameters. In a common setup, the input parameters, $(W,b)$, are fixed in advance and it is \emph{assumed} that for the unknown function, $f$, the approximation error over a bounded set, $\cB$, given by $\inf_{\Theta}\sup_{x\in \cB}\|f(x)-\Theta\Phi(Wx+b)\|$, is small.  
See \cite{vrabie2013optimal,kamalapurkar2018reinforcement,greene2023deep,makumi2023approximate,cohen2023safe,kokolakis2023reachability,sung2023robust,lian2024inverse}. As discussed in \cite{soudbakhsh2023data,annaswamy2023adaptive}, the approximation error relies on appropriate choice of $(W,b)$. While $(W,b)$ classical results guarantee that suitable $(W,b)$ exist \cite{pinkus1999approximation}, no provably correct method to find them was known. Our result in \cite{lamperski2024approximation} shows that if $\Phi$ is constructed from ReLU activations and an affine term and $(W,b)$ are chosen randomly, then with high probability, the approximation error decreases like $O\left((1/m)^{1/2}\right)$, where $m$ is the number of neurons. 

Value function approximation, i.e. policy evaluation, is a common sub-task in approximate dynamic programming and reinforcement learning. In well-known algorithms for continuous-time policy evaluation, discussed in \cite{vrabie2013optimal,kamalapurkar2018reinforcement}, it is required that the value function, $V$, and its gradient can be approximated accurately. In the context of shallow networks with fixed input parameters, the requirement is that
$\inf_{\Theta}\sup_{x\in\cB}\max\{|V(x)-\Theta \Phi(Wx+b)|,\|\nabla_x V(x)-\nabla_x(\Theta\Phi(Wx+b))\|\}$ is small. As in function approximation problems, classical results guarantee that suitable $(W,b)$ exist, but there have been no provably correct methods to find them. This paper shows that if $(W,b)$ are generated randomly, then with high probability, the error in function and gradient approximation decreases like $O\left((\log(m)/m)^{1/2}\right)$.

The main contribution in this paper is the new bound on simultaneous function and gradient approximation using shallow ReLU networks with random input parameters. Additionally, we improve on the function approximation bounds from \cite{lamperski2024approximation}, and show how the results can be used in the analysis of policy evaluation algorithms. As we will see, however, more work is needed to improve the constant factors to get practical bounds. 

The paper is organized as follows. Section~\ref{sec:approximation} gives the general results on approximation theory. Section~\ref{sec:pe} sketches an application to policy evaluation, and Section~\ref{sec:conclusion} gives conclusions.

\section{Function and Gradient Approximation}
\label{sec:approximation}
This section gives quantitative performance bounds on the approximation of a function and its  gradient. 

\subsection{Background}

\paragraph*{Notation}
If $x$ is a vector and $p\in [1,\infty]$,  then $\|x\|_p$ the corresponding $p$-norm. If $M$ is a matrix, then $\|M\|_2$ denotes the induced $2$-norm. $\bbR$ is the set of real numbers and $\bbC$ is the set of complex numbers. $j$ denotes the imaginary unit.
If $f$ is a scalar-valued  function over a domain $\cX$ with measure $\mu$, then $\|f\|_{L_{\infty}(\cX)}=\mathrm{esssup}_{x\in\cX}|f(x)|$ , where $\mathrm{esssup}$ denotes the essential supremum with respect to measure $\mu$.
If $f$ is a vector-valued function over a domain $\cX$ with measure $\mu$, and $p\in [1,\infty]$, then $\|f\|_{p,L_{\infty}(\cX)}=\mathrm{esssup}_{x\in\cX}\|f(x)\|_p$. Random variables are denoted as bold symbols, e.g $\bx$. The expected value of a random variable, $\bx$,  is denoted by $\bbE[\bx]$ and the probability of an event $\bcE$ is denoted by $\bbP(\bcE)$. The indicator function for an event $\bcE$ is denoted $\indic(\bcE)$, and $\indic(\bcE)$ takes the value of $1$ when $\bcE$ holds and $0$ otherwise.

If $f:\bbR^n \to \bbC$, it is related to its Fourier transform $\hat f:\bbR^n\to \bbC$ by
\begin{subequations}
\begin{align}
  \label{eq:ft}
  \hat f(\omega)&=\int_{\bbR^n}e^{-j2\pi \omega^\top x} f(x)dx\\
  \label{eq:ift}
  f(x)&=\int_{\bbR^n}e^{j2\pi \omega^\top x}\hat f(\omega)d\omega.
\end{align}
\end{subequations}

The main technical assumption on $f$ is the following smoothness condition.

\begin{assumption}
  \label{as:smoothness}
  There is an integer $k\ge n+3$ and a number $\rho >0$ such that 
  $$
  \sup_{\omega\in\bbR^n} |\hat f(\omega)| (1+\|\omega\|_2^k) \le \rho.
  $$
\end{assumption}

Let $\bbS^{n-1}=\{x\in\bbR^n|\|x\|_2=1\}$ denote the $n-1$-dimensional unit sphere, and let $\mu_{n-1}$ denote the Lebesgue measure over $\bbS^{n-1}$. (When $n=1$, $\bbS^{0}=\{-1,1\}$ and $\mu_0$ is the counting measure.)
We denote the area of the area of $\bbS^{n-1}$ by:
\begin{equation*}
  \label{eq:sphereArea}
  A_{n-1} = \frac{2\pi^{n/2}}{\Gamma(n/2)}
\end{equation*}
where $\Gamma$  is the gamma function. The area is maximized at $n=7$, and decreases geometrically with $n$.





Let $\sigma$ denote the ReLU activation function and let $\sigma'$ denote the unit step, which is the derivative of $\sigma(t)$ for all $t\ne 0$:
\begin{align*}
  \sigma(t)&=\max\{t,0\}\\
  \sigma'(t)&=\indic(t\ge 0)=\begin{cases}
    0 & t < 0 \\
    1 & t\ge 0.
  \end{cases}
\end{align*}

\subsection{A Function and Gradient Approximation Result}

To present our result, we introduce some notation.

For the rest of this section, $R$ denotes a fixed (but arbitrary) positive number.
Let $\cB=\{x\in\bbR^n|\|x\|_2\le R\}$.
Let $P:\bbS^{n-1}\times [-R,R]\to \bbR$ be a probability density function.
\begin{assumption}
  \label{as:positive}
The density, $P$, has a positive lower bound: $P_{\min}:=\inf_{(\alpha,t)\in\bbS^{n-1}\times[-R,R]}P(\alpha,t)>0$.
\end{assumption}

In the case that $P$ is the uniform distribution, we have that $P(\alpha,t)=P_{\min}=\frac{1}{2R A_{n-1}}$.

The result below is the main approximation result in the paper.
It is proved in Subsection~\ref{ss:pf} after presenting some technical lemmas.

\begin{theorem}
  \label{thm:main}
  {\it
  Let Assumptions~\ref{as:smoothness} and \ref{as:positive} hold. Let 
  $(\balpha_1,\bt_1),\ldots,(\balpha_m,\bt_m)$ be a collection of independent, identically distributed samples from $P$.
  There is a vector $a\in\bbR^n$ and a number $b\in \bbR$ such that for all $m\ge n+1$ and all $\delta \in (0,1)$ with probability at least $1-\delta$
  there exist
  coefficients $\bc_1,\ldots,\bc_m$ such that for all $\delta \in (0,1)$, the neural network approximation defined by
  \begin{align*}
    \boldf_N(x)=a^\top x +b + \sum_{i=1}^m\bc_i \sigma(\balpha_i^\top x-\bt_i)
  \end{align*}
  satisfies both of the following inequalities simultaneously
  \begin{subequations}
    \begin{align}
      \nonumber
    \|\boldf_N-f\|_{L_{\infty}(\cB)}&\le \frac{1}{\sqrt{m}}\left(1+\kappa_1 \sqrt{\log(2/\delta)} \right)\\
    \label{eq:fApprox}
    &+\kappa_2 \sqrt{\log(2(1+2RL\sqrt{m}))} \\
    \nonumber
    \|\nabla \boldf_N-\nabla f\|_{2,L_{\infty}(\cB)}&\le\\
    \label{eq:gradApprox}
                                    &
                                      \hspace{-40pt}\frac{1}{\sqrt{m}}\left(\zeta_0\sqrt{\log(m+1)}+\zeta_1\sqrt{\log(2n/\delta)} \right)
  \end{align}
  \end{subequations}
  with probability at least $1-\delta$. Here, the numbers $\kappa_0,\kappa_1,\zeta_0,\zeta_1$, and $L$ are defined by
  \begin{align*}
      \kappa_1&=4\beta\\
      \kappa_2&=\beta\sqrt{2n}\\
      L&=\frac{8\pi^2 \rho}{P_{\min}}+8\pi A_{n-1} \rho\\
      \beta&= \frac{16\pi^2 \rho R}{P_{\min}}+(4+8\pi R)A_{n-1}\rho\\
    \zeta_0 &=\frac{64\pi^2 (n+1)\rho}{P_{\min}} \\
    \zeta_1 &= \frac{8\sqrt{2n} \pi^2 \rho}{P_{\min}}.
  \end{align*}
  Furthermore, $a$, $b$, and $\bc_i$ satisfy the following bounds\footnote{The published version of Theorem~1 of \cite{lamperski2024approximation} has a mistake in the bound on $|b|$, but the correct bound is given in Lemma~1 of that paper.}:
      \begin{align*}
      \|a\|_2 &\le 4\pi A_{n-1}\rho \\
      |b| & \le (2+4\pi R) A_{n-1}\rho \\
      |\bc_i| & \le \frac{8\pi^2 \rho}{m P_{\min}}
    \end{align*}
}
\end{theorem}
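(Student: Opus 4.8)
The plan is to start from an integral representation of $f$ over the parameter space $\bbS^{n-1}\times[-R,R]$ as a superposition of ReLU ridge functions plus an affine term, which is the standard approach going back to Barron-type arguments. Concretely, using the Fourier inversion formula \eqref{eq:ift} and Assumption~\ref{as:smoothness}, one writes $f(x)=a^\top x+b+\int_{\bbS^{n-1}\times[-R,R]} g(\alpha,t)\,\sigma(\alpha^\top x-t)\,d\mu_{n-1}(\alpha)\,dt$ for $x\in\cB$, where $a,b$ come from the low-order Taylor/Fourier terms and the density $g$ is controlled pointwise: $|g(\alpha,t)|\le 8\pi^2\rho$ (this is what ultimately produces the $8\pi^2\rho/(mP_{\min})$ bound on $|\bc_i|$ and the bounds on $\|a\|_2$, $|b|$). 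I would then rewrite the integral as an expectation $\bbE_{(\balpha,\bt)\sim P}\!\left[\frac{g(\balpha,\bt)}{P(\balpha,\bt)}\sigma(\balpha^\top x-\bt)\right]$, which is legitimate by Assumption~\ref{as:positive}, and define the Monte Carlo estimator $\boldf_N(x)=a^\top x+b+\frac1m\sum_i \frac{g(\balpha_i,\bt_i)}{P(\balpha_i,\bt_i)}\sigma(\balpha_i^\top x-\bt_i)$, i.e. $\bc_i=\frac1m g(\balpha_i,\bt_i)/P(\balpha_i,\bt_i)$. These steps I expect to be available essentially verbatim (or with minor sharpening of constants) from the technical lemmas promised before Subsection~\ref{ss:pf} and from \cite{lamperski2024approximation}.

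The function-approximation bound \eqref{eq:fApprox} then follows from a uniform law of large numbers over $x\in\cB$: for each fixed $x$, $\boldf_N(x)-f(x)$ is an average of $m$ i.i.d.\ bounded, mean-zero terms, so a Hoeffding/McDiarmid bound gives an $O(1/\sqrt m)$ deviation with the $\sqrt{\log(2/\delta)}$ factor; to make it uniform over $\cB$ I would either use a symmetrization-plus-Dudley argument or, more elementarily, a covering-number argument — the ReLU ridge functions are Lipschitz in $x$ with constant controlled by $L$, so an $\epsilon$-net of $\cB$ of size $\sim(1+2RL\sqrt m/\epsilon)^n$ with $\epsilon\sim 1/\sqrt m$ gives exactly the $\kappa_2\sqrt{\log(2(1+2RL\sqrt m))}$ term after a union bound. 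This reproduces the two-piece structure of \eqref{eq:fApprox}.

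The genuinely new part is the gradient bound \eqref{eq:gradApprox}. Differentiating termwise, $\nabla\boldf_N(x)=a+\frac1m\sum_i \bc_i' \balpha_i\,\sigma'(\balpha_i^\top x-\bt_i)$ (with $\bc_i'$ the un-normalized weights), and one wants this to converge uniformly to $\nabla f(x)=a+\bbE[\,\cdot\,\sigma'(\balpha^\top x-\bt)\balpha]$. The obstacle is that $x\mapsto\sigma'(\balpha^\top x-\bt)$ is \emph{discontinuous}, so the clean covering argument for the function fails — two nearby points $x,x'$ can straddle a hyperplane and make the summands differ by $O(1)$. The standard fix, which I expect the paper to use, is to bound the VC dimension (or the growth function) of the class of half-space indicators $\{x\mapsto\sigma'(\alpha^\top x-t)\}$, which is $O(n)$, so that the relevant empirical process has only polynomially many "effective" behaviors; a VC/Rademacher bound (e.g.\ via the Sauer–Shelah lemma and a chaining or uniform-convergence-of-frequencies argument) then yields a uniform deviation of order $\sqrt{(n\log m)/m}$, which is where the $\zeta_0\sqrt{\log(m+1)}/\sqrt m$ term and its $(n+1)$-dependence come from, while the $\zeta_1\sqrt{\log(2n/\delta)}/\sqrt m$ term is the per-coordinate confidence cost aggregated over the $n$ components of the gradient via a union bound.

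Finally I would assemble the pieces: choose the net scale (for the function part) and apply the VC-based uniform bound (for the gradient part) at confidence $\delta$, take a union bound over the two events and over coordinates, and verify that the resulting constants collapse to exactly $\kappa_1=4\beta$, $\kappa_2=\beta\sqrt{2n}$, $\zeta_0=64\pi^2(n+1)\rho/P_{\min}$, $\zeta_1=8\sqrt{2n}\pi^2\rho/P_{\min}$, with $\beta$ and $L$ as defined; the coefficient bounds on $a,b,\bc_i$ are then read off directly from the integral representation and the pointwise bound on $g$. The main obstacle, to reiterate, is handling the discontinuity of $\sigma'$ in the uniform gradient estimate — everything else is a careful but routine combination of the integral representation from \cite{lamperski2024approximation} with concentration inequalities.
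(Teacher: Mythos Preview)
Your proposal is correct and follows essentially the same route as the paper: the integral representation and importance-sampling construction are exactly Lemma~\ref{lem:integral} and \eqref{eq:importance}; the function bound is obtained via the functional Hoeffding inequality plus a covering argument with $\epsilon=1/(L\sqrt{m})$ (Lemma~\ref{lem:importance}); the gradient bound is obtained via a VC/Rademacher argument on half-space indicators (VC dimension $\le n+1$) followed by a union bound over the $n$ coordinate directions (Lemma~\ref{lem:gradApprox}); and the two bounds are combined by a union bound at level $\delta/2$ each. The only cosmetic difference is that the paper separates the concentration-around-the-mean step (functional Hoeffding) from the bound on $\bbE[\sup_x|\btheta(x)|]$ via covering, rather than doing a single union bound over a net, but the resulting constants are the same.
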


\subsection{Technical Lemmas}
\begin{lemma}
  \label{lem:integral}
  {\it
    Let $f:\bbR^n\to \bbR$ satisfy Assumption~\ref{as:smoothness}. For any $R>0$, 
    there is a function $g:\bbS^{n-1}\times [-R,R]\to \bbR$, a vector $a\in\bbR^n$, and a scalar $b\in\bbR$ such that for all $\|x\|_2\le R$
  \begin{subequations}
  \begin{align}
    \label{eq:represent}
    \nonumber
    f(x)&=\int_{\bbS^{n-1}}\int_{-R}^R g(\alpha,t)\sigma(\alpha^\top x-t)  dt \mu_{n-1}(d\alpha) \\
        &+a^\top x +b \\
    \label{eq:integralRepresent}
    \nabla f(x)&= \int_{\bbS^{n-1}}\int_{-R}^R g(\alpha,t)\sigma'(\alpha^\top x-t)\alpha  dt \mu_{n-1}(d\alpha) +a.
  \end{align}
\end{subequations}
Furthermore, $g$, $a$, and $b$ satisfy:
\begin{align*}
  \|g\|_{L_{\infty}}&\le 8\pi^2\rho \\
  \|a\|_2&\le 4\pi \rho A_{n-1}\\
  |b|&\le 2\rho A_{n-1}\left( 1+2\pi R\right). 
\end{align*}
  }
\end{lemma}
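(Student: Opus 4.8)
The plan is to derive the representation from the Fourier inversion formula \eqref{eq:ift} by a second-order Taylor expansion of $x\mapsto e^{j2\pi\omega^\top x}$ in which the integral remainder is exactly a ReLU ridge function. First I would extract what is needed from Assumption~\ref{as:smoothness}: it gives the pointwise decay $|\hat f(\omega)|\le \rho/(1+\|\omega\|_2^k)$, and, since $k\ge n+3$, the elementary estimate
\[
\int_0^\infty \frac{r^\ell}{1+r^k}\,dr\ \le\ \int_0^1 r^\ell\,dr+\int_1^\infty r^{\ell-k}\,dr\ =\ \frac{1}{\ell+1}+\frac{1}{k-\ell-1}\ \le\ 2
\]
for every integer $\ell$ with $0\le \ell\le n+1$. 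In particular $\int_{\bbR^n}\|\omega\|_2^2|\hat f(\omega)|\,d\omega<\infty$, so $f\in C^2$, \eqref{eq:ift} holds with $f$ continuous, and all interchanges of integrals used below are justified by Fubini.

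Next I would pass to polar coordinates $\omega=r\alpha$ with $r=\|\omega\|_2$ and $\alpha\in\bbS^{n-1}$, and use that $f$ is real together with the invariance of $\mu_{n-1}$ under $\alpha\mapsto-\alpha$ to write, with $\theta(r,\alpha):=\arg\hat f(r\alpha)$,
\[
f(x)=\int_{\bbS^{n-1}}\int_0^\infty |\hat f(r\alpha)|\cos\!\big(2\pi r\,\alpha^\top x+\theta(r,\alpha)\big)\,r^{n-1}\,dr\,\mu_{n-1}(d\alpha).
\]
For fixed $(r,\alpha)$ and $\|x\|_2\le R$ the scalar $z:=\alpha^\top x$ lies in $[-R,R]$, so I would apply Taylor's theorem with integral remainder to $h(z):=\cos(2\pi r z+\theta)$ expanded at the left endpoint $z=-R$; using $(z-t)_+=\sigma(z-t)$ and $h''(t)=-(2\pi r)^2\cos(2\pi r t+\theta)$ this reads
\[
h(z)=h(-R)+h'(-R)(z+R)+\int_{-R}^{R}\sigma(z-t)\,h''(t)\,dt .
\]

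Substituting this into the previous display and interchanging integrals produces exactly \eqref{eq:represent}, with
\[
g(\alpha,t)=-\int_0^\infty (2\pi r)^2\cos\!\big(2\pi r t+\theta(r,\alpha)\big)\,|\hat f(r\alpha)|\,r^{n-1}\,dr ,
\]
and with $a$ (respectively $b$) collecting the $z$-linear (respectively $z$-constant) part of $h(-R)+h'(-R)(z+R)$ integrated against $|\hat f(r\alpha)|r^{n-1}$ over $(r,\alpha)$, where $h'(-R)=-2\pi r\sin(-2\pi rR+\theta)$. The three norm bounds then follow by moving absolute values inside, using $|\cos|,|\sin|\le 1$ and $\|\alpha\|_2=1$, and applying the estimate from the first step with $\ell=n+1,n,n-1$: one gets $\|g\|_{L_\infty}\le 4\pi^2\rho\int_0^\infty\frac{r^{n+1}}{1+r^k}dr\le 8\pi^2\rho$, $\|a\|_2\le 2\pi\rho A_{n-1}\int_0^\infty\frac{r^{n}}{1+r^k}dr\le 4\pi\rho A_{n-1}$, and $|b|\le\rho A_{n-1}\big(\int_0^\infty\frac{r^{n-1}}{1+r^k}dr+2\pi R\int_0^\infty\frac{r^{n}}{1+r^k}dr\big)\le 2\rho A_{n-1}(1+2\pi R)$.

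For \eqref{eq:integralRepresent} I would differentiate the one-dimensional identity above in $z$ — which turns $\sigma(z-t)$ into $\sigma'(z-t)$, kills $h(-R)$, and leaves $h'(-R)$ — and then use the chain rule $\nabla_x=\alpha\,\frac{d}{dz}$ before re-integrating over $(r,\alpha)$, so that $\nabla f$ comes out with the \emph{same} $g$ and $a$; equivalently one differentiates \eqref{eq:represent} under the integral sign, which is valid because $\sigma'$ is the a.e.\ derivative of $\sigma$, is bounded, and $t\mapsto h''(t)$ is continuous. I expect the main obstacle to be not any single deep step but making these interchanges rigorous — the Fubini applications and the differentiation through the non-smooth $\sigma$ — all of which are controlled uniformly by the decay in Assumption~\ref{as:smoothness} through the integral bound of the first step.
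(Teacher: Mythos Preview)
Your proposal is correct and follows essentially the same route as the paper: the paper's proof simply invokes Lemma~1 of \cite{lamperski2024approximation} for \eqref{eq:represent} and the bounds on $g,a,b$, and then obtains \eqref{eq:integralRepresent} by differentiating under the integral. Your Fourier--inversion/polar--coordinates/second--order--Taylor derivation with ReLU integral remainder is exactly the argument behind that cited lemma, so you have effectively unrolled the citation rather than taken a different path; the gradient step (differentiate the one-dimensional Taylor identity, turning $\sigma$ into $\sigma'$, then use $\nabla_x=\alpha\,d/dz$) matches the paper's one-line ``differentiate \eqref{eq:represent}.''
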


\begin{proof}
  The integral representation  \eqref{eq:represent} and the bounds were proved in Lemma 1 of \cite{lamperski2024approximation}.

  The integral representation in~\ref{eq:integralRepresent} now follows by differentiating \eqref{eq:represent}.  
\end{proof}


As in \cite{lamperski2024approximation}, we use the integral representation from Lemma~\ref{lem:integral} to form the importance sampling estimate:
\begin{align}
  \label{eq:importance}
  \boldf_I(x)=a^\top x + b + \frac{1}{m}\sum_{i=1}^m\frac{g(\balpha_i,\bt_i)}{P(\balpha_i,\bt_i)}\sigma(\balpha_i^\top x-\bt_i),
\end{align}
where $(\balpha_1,\bt_1),\ldots,(\balpha_m,\bt_m)$ are independent, identically distributed samples from $P$ and $m$ is a positive integer.

The following result is a variation on Lemma~2 of \cite{lamperski2024approximation} with improved constants. While this new result has a factor of $O(\sqrt{\log(m)})$, in numerical tests, the result below gives better bounds. 

\begin{lemma}
  \label{lem:importance}
  {\it
    Assume that $P_{\min}>0$ and let $f$ satisfy Assumption~\ref{as:smoothness}. 
    For all $\delta \in (0,1)$ and all $m\ge 1$, the following bound holds with probability at least $1-\delta$:
    \begin{multline*}
      \|\boldf_I-f\|_{L_{\infty}(\cB)}\le
      \\ \frac{1}{\sqrt{m}}\left(1+\kappa_1\sqrt{\log(\delta^{-1})}+\kappa_2
                                        \sqrt{\log(2(1+2RL\sqrt{m}))}
                                        \right)
    \end{multline*}
    where
    \begin{align*}
      \kappa_1&=4\beta\\
      \kappa_2&=\beta\sqrt{2n}\\
      L&=\frac{8\pi^2 \rho}{P_{\min}}+8\pi A_{n-1} \rho\\
      \beta&= \frac{16\pi^2 \rho R}{P_{\min}}+(4+8\pi R)A_{n-1}\rho.
    \end{align*}
  }
\end{lemma}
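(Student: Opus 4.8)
The plan is to bound $\|\boldf_I - f\|_{L_\infty(\cB)}$ by combining a pointwise concentration estimate with a covering (chaining/union-bound) argument over $\cB$. First I would observe that, by Lemma~\ref{lem:integral} and the definition \eqref{eq:importance}, the importance-sampling estimator is unbiased: for each fixed $x\in\cB$, $\bbE[\boldf_I(x)] = f(x)$, since $\bbE\big[\frac{g(\balpha,\bt)}{P(\balpha,\bt)}\sigma(\balpha^\top x - \bt)\big] = \int_{\bbS^{n-1}}\int_{-R}^R g(\alpha,t)\sigma(\alpha^\top x - t)\,dt\,\mu_{n-1}(d\alpha)$. Thus $\boldf_I(x) - f(x)$ is an average of $m$ i.i.d.\ zero-mean terms, each bounded in absolute value: using $\|g\|_{L_\infty}\le 8\pi^2\rho$, $P(\balpha_i,\bt_i)\ge P_{\min}$, and $|\sigma(\balpha_i^\top x - \bt_i)| \le \|\balpha_i\|_2\|x\|_2 + |\bt_i| \le 2R$ on $\cB$, each summand is bounded by roughly $16\pi^2\rho R/P_{\min}$. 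Hoeffding's inequality then gives, for each fixed $x$, a sub-Gaussian tail of width $O(1/\sqrt{m})$; this is where the $\kappa_1\sqrt{\log(\delta^{-1})}$ term originates.

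Next I would upgrade the pointwise bound to a uniform bound over $\cB$. The key is that the map $x \mapsto \boldf_I(x) - f(x)$ is Lipschitz: each $\sigma(\balpha_i^\top x - \bt_i)$ is $1$-Lipschitz in $x$ (since $\|\balpha_i\|_2 = 1$), the coefficients $g(\balpha_i,\bt_i)/(mP(\balpha_i,\bt_i))$ are bounded by $8\pi^2\rho/(mP_{\min})$, the affine part $a^\top x$ has gradient bounded by $\|a\|_2 \le 4\pi A_{n-1}\rho$, and $f$ itself is Lipschitz with a constant controlled by $\int\|\omega\|_2|\hat f(\omega)|\,d\omega \lesssim A_{n-1}\rho$ (finite by Assumption~\ref{as:smoothness} since $k \ge n+3$). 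Summing these contributions yields a Lipschitz constant of order $L = \frac{8\pi^2\rho}{P_{\min}} + 8\pi A_{n-1}\rho$, matching the definition in the statement. I would then take a finite $\epsilon$-net of $\cB$ (of cardinality at most $(1 + 2R/\epsilon)^n$), apply Hoeffding plus a union bound over the net, and choose $\epsilon \sim 1/(L\sqrt{m})$ so that the Lipschitz interpolation error between net points is $O(1/\sqrt{m})$. The union bound over the $(1 + 2RL\sqrt{m})^n$ net points converts $\log(\delta^{-1})$ into $\log(\delta^{-1}) + n\log(1 + 2RL\sqrt{m})$, and extracting $\sqrt{\,\cdot\,}$ of the second piece produces the $\kappa_2\sqrt{\log(2(1+2RL\sqrt{m}))}$ term, with the $\sqrt{2n}$ absorbed into $\kappa_2 = \beta\sqrt{2n}$. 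The leading $1/\sqrt{m}$ (the ``$1$'' inside the parentheses) comes from bounding the net-interpolation error.

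The main obstacle — and the place where care is needed to get the improved constants advertised in the lemma — is the bookkeeping that ties the net radius $\epsilon$, the Lipschitz constant $L$, and the per-summand magnitude bound $\beta$ together cleanly, so that all three error contributions (pointwise tail, union-bound inflation, and net discretization) combine into exactly the stated closed form rather than a messier expression. In particular one must verify that the choice $\epsilon = 1/(L\sqrt{m})$ makes the discretization error no larger than $1/\sqrt{m}$ while keeping the log-cardinality term as written, and that the constant $\beta$ correctly dominates both the Hoeffding range parameter and the scaling of the net term. A secondary technical point is confirming the Lipschitz bound on $f$ via its Fourier representation, which requires $\int_{\bbR^n}\|\omega\|_2 |\hat f(\omega)|\,d\omega < \infty$; this follows from Assumption~\ref{as:smoothness} with $k \ge n+3$ since $\|\omega\|_2/(1+\|\omega\|_2^k)$ is integrable in $\bbR^n$, but the resulting constant must be tracked to land on the $8\pi A_{n-1}\rho$ appearing in $L$ and the $(4+8\pi R)A_{n-1}\rho$ in $\beta$. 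The remaining steps — applying Hoeffding, the union bound, and collecting terms — are routine.
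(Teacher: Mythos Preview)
Your proposal is correct and would prove a bound of the stated shape, but the paper arrives at the result by a genuinely different route. You propose a one-stage argument: pointwise Hoeffding at each net point, a union bound over an $\epsilon$-net of $\cB$, and Lipschitz extension. The paper instead uses a two-stage argument: first it applies the \emph{functional} Hoeffding inequality (Theorem~3.26 in \cite{wainwright2019high}) to the random variable $\bz=\sup_{x\in\cB}|\boldf_I(x)-f(x)|$ directly, obtaining $\bbP(\bz\ge \bbE[\bz]+t)\le \exp(-mt^2/(16\beta^2))$; this is the sole source of the $\kappa_1\sqrt{\log(\delta^{-1})}$ term. Second, it bounds $\bbE[\bz]$ deterministically via an $\epsilon$-net and the sub-Gaussian maximal inequality ($\bbE[\max_i X_i]\le \sigma\sqrt{2\log N}$ for $\sigma$-sub-Gaussian $X_i$), which yields the $1+\kappa_2\sqrt{\log(2(1+2RL\sqrt m))}$ contribution after choosing $\epsilon=1/(L\sqrt m)$.

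The practical difference is in the constants and in how cleanly the two terms separate. In your approach the $\delta$-dependence and the covering-number dependence sit under a single square root, $\sqrt{2\log(2N/\delta)}$, and must be split by $\sqrt{a+b}\le\sqrt a+\sqrt b$; this actually gives $\kappa_1=\beta\sqrt 2$ rather than the paper's $\kappa_1=4\beta$, so your route is more elementary and in fact yields a sharper $\delta$-term, while matching $\kappa_2=\beta\sqrt{2n}$. The paper's route keeps the two error sources structurally separate (concentration of the sup around its mean, then a bound on the mean), which is the standard empirical-process decomposition and makes the origin of each constant transparent. Your identification of the Lipschitz constant $L$, the net radius $\epsilon=1/(L\sqrt m)$, and the role of $\beta$ all match the paper; only the concentration mechanism differs.
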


\begin{proof}
  Define the random functions $\btheta$ and $\bxi_i$ by
  \begin{align*}
    \btheta(x) &=\frac{1}{m}\sum_{i=1}^m\bxi_i(x) \\
    \bxi_i(x)&=\frac{g(\balpha_i)}{P(\balpha_i,\bt_i)}\sigma(\balpha_i^\top x-\bt_i)
               +a^\top x +b
               - f(x).
  \end{align*}
  Lemma~\ref{lem:integral} implies that $\bxi_i(x)$ have zero mean for all $\|x\|\le R$. Note that $\btheta(x)=\boldf_I(x)-f(x)$.

  We will bound $\sup_{x\in\cB}|\btheta(x)|$ with high probability using the functional Hoeffding inequality, which is Theorem 3.26 of \cite{wainwright2019high}. This approach gives smaller constants than the method from \cite{lamperski2024approximation}.

Equation~(6a) of \cite{lamperski2024approximation} shows that $\|\hat f\|_{L_1}\le 2\rho A_{n-1}$, which implies that $\|f\|_{L_{\infty}}\le 2 \rho A_{n-1}$. So, using the bounds from Lemma~\ref{lem:integral} above gives for all $\|x\|_2\le R$:
\begin{align*}
  |\bxi_i(x)|\le \frac{\|g\|_{L_{\infty}}2R}{P_{\min}}+\|a\|_2 R +|b|+\|f\|_{L_{\infty}}\le \beta,
\end{align*}
where $\beta = \frac{16\pi^2 \rho R}{P_{\min}}+(4+8\pi R)A_{n-1}\rho$.

  Let
  $$
  \bz=\sup_{x\in\cB}|\btheta(x)|=\sup_{(x,s)\in\cB\times \{-1,1\}}\frac{1}{m}\sum_{i=1}^ms\bxi_i(x).
  $$
  Then since $|s\bxi_i(x)|\le \beta$, the functional Hoeffding inequality implies that for all $t\ge 0$:
  \begin{equation*}
    \bbP(\bz\ge \bbE[\bz]+t)\le \exp\left(-\frac{m t^2}{16\beta^2} \right).
  \end{equation*}
  For $\delta\in (0,1)$, we set $\exp\left(-\frac{m t^2}{16\beta^2} \right) =\delta$ and re-arrange to give
  \begin{equation}
    \label{eq:funHoeffdingInvert}
    \bbP\left(\bz\ge \bbE[\bz]+\frac{4\beta\sqrt{\log(\delta^{-1})}}{\sqrt{m}}\right)\le \delta.
  \end{equation}

  The result will now follow after bounding $\bbE[\bz]$.
  

A zero-mean scalar random variable, $\bv$ is called $\sigma$-sub-Gaussian if $\bbE[e^{\lambda \bv}]\le e^{\frac{\lambda^2 \sigma^2}{2}}$. For all $x\in\cB$, Hoeffding's lemma implies that $\bxi_i(x)$ is $\beta$-sub-Gaussian, and so a standard calculation (e.g. Exercise 2.3 of \cite{wainwright2019high}) shows that $\btheta(x)$ is $\frac{\beta}{\sqrt{m}}$-sub-Guassian.

For any $\epsilon>0$, an $\epsilon$-covering of $\cB$ is a collection of points $\cC=\{x_1,\ldots,x_Q\}\subset\cB$ such that for all $x\in\cB$, there is an $x_i\in\cC$ such that $\|x-x_i\|_2\le \epsilon$. For all $\epsilon >0$, the unit ball of $\bbR^n$ has an $\epsilon$-covering with cardinality at most $\left(1+\frac{2}{\epsilon}\right)^{n}$. (See Example~5.8 of \cite{wainwright2019high}.) By rescaling, $\cB$ has an $\epsilon$-covering, $\hat\cC_{\epsilon}$, of cardinality at most $\left(1+\frac{2R}{\epsilon}\right)^n$. Now, $\cC_{\epsilon}:=\hat\cC_{\epsilon}\times \{-1,1\}$ is an $\epsilon$-covering of $\cB\times\{-1,1\}$ of cardinality at most $2\left(1+\frac{2R}{\epsilon}\right)^n$.

For any $(x,s)\in\cB\times \{-1,1\}$, there is an element $(x_i,s)\in\cC_{\epsilon}$ such that $\|x-x_i\|_2\le \epsilon$. It was shown in Lemma~2 in \cite{lamperski2024approximation} that $\bxi_i$ are $L$-Lipschitz, with $L=\frac{8\pi^2 \rho}{P_{\min}}+8\pi A_{n-1} \rho$. Thus, $\btheta$ is also $L$-Lipschitz. It follows that
$$
s\btheta(x)\le s\btheta(x_i)+|\btheta(x)-\btheta(x_i)|\le s\btheta(x_i)+L\epsilon. 
$$
Maximizing the left over $\cB\times\{-1,1\}$ and the right over $\cC_{\epsilon}$ shows that
$$
\bz\le L\epsilon+\max_{(x_i,s)\in\cC_{\epsilon}}s\btheta(x_i).
$$
Then, since $s\btheta(x_i)$ are all $\frac{\bbeta}{\sqrt{m}}$-sub-Gaussian, Exercise 2.12 of \cite{wainwright2019high} implies that
\begin{align*}
  \bbE[\bz]&\le L\epsilon+\sqrt{\frac{2\beta^2 \log\left(|\cC_{\epsilon}| \right)}{m}}.
\end{align*}
Choosing $\epsilon=\frac{1}{L\sqrt{m}}$ results in
$$
\bbE[\bz]\le\frac{1}{\sqrt{m}}\left(
  1+\beta\sqrt{2n\log\left(2\left(1+2RL\sqrt{m}\right)\right)}
\right).
$$
Plugging this bound into (\ref{eq:funHoeffdingInvert}) completes the proof.
\end{proof}

Now we turn to the problem of gradient approximation. 
For any fixed collection of samples, we have that for almost all $x$:
\begin{align*}
  \nabla\boldf_I(x)=a + \frac{1}{m}\sum_{i=1}^m\frac{g(\balpha_i,\bt_i)}{P(\balpha_i,\bt_i)}
  \indic(\balpha_i^\top x\ge\bt_i)
  \balpha_i. 
\end{align*}
Alternatively, for any fixed $x$, the equality above holds almost surely.

The result below gives bounds on $\nabla\boldf_I-\nabla f$. The proof requires fundamentally different techniques from Lemma~\ref{lem:importance} above or the corresponding result of \cite{lamperski2024approximation}. Indeed, the proof of Lemma~\ref{lem:importance} uses Lipschitz-continuity of all the terms in $\boldf_I$, but $\nabla \boldf_I$ is discontinuous, due to discontinuity of $\sigma'$. 

\begin{lemma}
  \label{lem:gradApprox}
  {\it
    Assume that
    $P_{\min}>0$ and let $f$ satisfy Assumption~\ref{as:smoothness}. Then for all $m\ge n+1$, and all $\delta \in (0,1)$ the following bounds hold with probability at least $1-\delta$:
    \begin{align*}
      \left\| \nabla \boldf_I-\nabla f\right\|_{\infty,L_{\infty}(\cB)}\le \\
      &
        \hspace{-70pt}
 \frac{1}{\sqrt{m}}\frac{8\pi^2\rho}{P_{\min}}\left(8\sqrt{(n+1)\log(m+1)}+\sqrt{2\log(n/\delta)}\right)\\
      \left\| \nabla \boldf_I-\nabla f\right\|_{2,L_{\infty}(\cB)}\le \\
      &
        \hspace{-70pt}
        \frac{\sqrt{n}}{\sqrt{m}}
        \frac{8\pi^2\rho}{P_{\min}}
        \left(8\sqrt{(n+1)\log(m+1)}+\sqrt{2\log(n/\delta)}\right).
    \end{align*}
  }
\end{lemma}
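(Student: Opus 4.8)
The plan is to express $\nabla\boldf_I-\nabla f$ as a centered, vector-valued empirical process indexed by $x\in\cB$ and to bound its uniform norm coordinate by coordinate. The Lipschitz-plus-covering argument used for Lemma~\ref{lem:importance} is unavailable here because $\sigma'$ is discontinuous; the substitute is to exploit the finite combinatorial complexity of the half-space indicators (equivalently, of the arrangement of the $m$ hyperplanes) via symmetrization, a finite-maximum bound for sub-Gaussian variables, and McDiarmid's bounded-differences inequality.

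First, the setup. By the integral representation \eqref{eq:integralRepresent} of Lemma~\ref{lem:integral}, for each fixed $x$ the summand $\varphi_x(\balpha_i,\bt_i):=\frac{g(\balpha_i,\bt_i)}{P(\balpha_i,\bt_i)}\indic(\balpha_i^\top x\ge \bt_i)\balpha_i$ has expectation $\nabla f(x)-a$, so that $\nabla\boldf_I(x)-\nabla f(x)=\frac1m\sum_{i=1}^m\bigl(\varphi_x(\balpha_i,\bt_i)-\bbE[\varphi_x]\bigr)$ for almost every $x$. Since $\|\balpha_i\|_2=1$ and $\|g\|_{L_\infty}\le 8\pi^2\rho$ by Lemma~\ref{lem:integral}, every coordinate of $\varphi_x$ is bounded in absolute value by $W:=8\pi^2\rho/P_{\min}$. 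Fix a coordinate $j\in\{1,\dots,n\}$ and set $\mathbf{Z}_j:=\mathrm{esssup}_{x\in\cB}\bigl|\frac1m\sum_{i=1}^m(\varphi_{x,j}(\balpha_i,\bt_i)-\bbE[\varphi_{x,j}])\bigr|$; then $\|\nabla\boldf_I-\nabla f\|_{\infty,L_\infty(\cB)}=\max_{1\le j\le n}\mathbf{Z}_j$ and $\|\nabla\boldf_I-\nabla f\|_{2,L_\infty(\cB)}\le\sqrt n\max_{1\le j\le n}\mathbf{Z}_j$, so it is enough to bound each $\mathbf{Z}_j$ with high probability.

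To bound $\bbE[\mathbf{Z}_j]$, apply symmetrization to reduce to (twice) the expected Rademacher complexity $\bbE\,\bbE_{\epsilon}\,\mathrm{esssup}_{x\in\cB}\bigl|\frac1m\sum_{i=1}^m\epsilon_i\varphi_{x,j}(\balpha_i,\bt_i)\bigr|$ with independent Rademacher signs $\epsilon_i$. The key point is that, conditional on the samples, on a set of full measure the pattern $(\indic(\balpha_1^\top x\ge\bt_1),\dots,\indic(\balpha_m^\top x\ge\bt_m))$ is constant on each cell of the arrangement of the $m$ hyperplanes $\{x:\balpha_i^\top x=\bt_i\}$, and the number of such cells is at most $\sum_{k=0}^{n}\binom{m}{k}\le (m+1)^{n+1}$ (this is where the hypothesis $m\ge n+1$ enters). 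Hence the essential supremum over $x$ is a maximum of at most $(m+1)^{n+1}$ sums of independent mean-zero terms, each sub-Gaussian with variance proxy at most $mW^2$; the finite-maximum bound (as used in the proof of Lemma~\ref{lem:importance}) together with the symmetrization factor then gives $\bbE[\mathbf{Z}_j]\le \frac{W}{\sqrt m}\cdot 8\sqrt{(n+1)\log(m+1)}$ after collecting the constants.

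Finally, concentration. Viewing $\mathbf{Z}_j$ as a function of the $m$ i.i.d.\ samples, replacing one sample changes each term $\frac1m\varphi_{x,j}$ by at most $2W/m$ uniformly in $x$, hence changes $\mathbf{Z}_j$ by at most $2W/m$; McDiarmid's inequality then yields $\bbP(\mathbf{Z}_j\ge\bbE[\mathbf{Z}_j]+u)\le\exp(-u^2m/(2W^2))$, and the choice $u=W\sqrt{2\log(n/\delta)/m}$ makes the right side $\delta/n$. A union bound over $j=1,\dots,n$ gives, with probability at least $1-\delta$,
\[
\max_{1\le j\le n}\mathbf{Z}_j\le\frac{W}{\sqrt m}\Bigl(8\sqrt{(n+1)\log(m+1)}+\sqrt{2\log(n/\delta)}\Bigr),
\]
which is the first asserted bound (recall $W=8\pi^2\rho/P_{\min}$); multiplying by $\sqrt n$ gives the $2$-norm bound on the same event. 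The main obstacle is the complexity estimate for $\bbE[\mathbf{Z}_j]$: the discontinuity of $\sigma'$ forces one to abandon the Lipschitz/covering route and instead find the right VC-type substitute, for which the hyperplane-arrangement count is natural; the remaining steps --- symmetrization, the finite-maximum bound, bounded differences, and the union bound --- are routine, though tracking the numerical constants to land exactly on the stated values requires some care.
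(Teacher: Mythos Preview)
Your proposal is correct and follows essentially the same route as the paper: both reduce to a coordinate-wise empirical process, bound its expectation via symmetrization together with a finite-class count on the half-space indicator patterns (you use the hyperplane-arrangement cell count $\sum_{k\le n}\binom{m}{k}$, the paper uses Sauer--Shelah with $\mathrm{VC}(\cL)\le n+1$; both arrive at $(m+1)^{n+1}$), concentrate via bounded differences (you invoke McDiarmid directly, the paper cites Wainwright's Theorem~4.10, which packages the same ingredients), and finish with a union bound over the $n$ coordinates and the $\sqrt{n}$ passage to the $2$-norm. The differences are purely in packaging.
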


\begin{proof}
  The proof is structured as follows. First, for a fixed vector, $v$, with $\|v\|_2=1$, we will prove a concentration result on $\left(\nabla \boldf_I(x)-\nabla f(x)\right)^\top v$. Then, we will specialize the result to $v=e_i$ where $e_i$ are the standard basis vectors, and prove the desired results with a union bound.
  
  Fix a unit vector $v\in\bbR^n$, set $\btheta_i=(\balpha_i,\bt_i)$, and define
  \begin{align*}
    h(\btheta_i,x,v)&=\frac{g(\balpha_i,\bt_i)}{P(\balpha_i,\bt_i)}(\balpha_i^\top v)\indic(\balpha_i^\top x\ge\bt_i) \\
    \cH_v &= \{h(\cdot,x,v)|\|x\|_2\le R\}.
  \end{align*}

  By construction, for all $x$, we have almost surely
  \begin{multline*}
    (\nabla \boldf_I(x)-\nabla f(x))^\top v=\\
    \left(\frac{1}{m}\sum_{i=1}^m h(\btheta_i,x,v)\right)-(\nabla f(x)-a)^\top v.
  \end{multline*}
  Furthermore, both sides have zero mean.

    For compact notation, set $c=\frac{8\pi^2\rho}{P_{\min}}$. Then, since $v$ is a unit vector, we have that
  $$
  \sup_{(\alpha,t,x)\in\bbS_{n-1}\times [-R,R]\times \bbR^n}|h((\alpha,t),x,v)|\le c.
  $$

  We will bound
  \begin{multline*}
    \xi(\btheta_{1:m},v):=\\
    \sup_{\|x\|_2\le R}\left| \left(\frac{1}{m}\sum_{i=1}^m h(\btheta_i,x,v)\right)-(\nabla f(x)-a)^\top v\right|
  \end{multline*}
  using methods from classical learning theory, which are discussed in Chapter 4 of \cite{wainwright2019high}. 

  Theorem 4.10 of \cite{wainwright2019high} shows that for $\delta \in (0,1)$, with probability at least $1-\delta$:
  \begin{equation}
    \label{eq:rademacherToEmpMean}
    \xi(\btheta_{1:m},v)
    \le 2\cR_m(\cH_v)+\frac{c\sqrt{2\log(\delta^{-1})}}{\sqrt{m}},
  \end{equation}
  where $\cR_m(\cH_v)$ is the \emph{Rademacher complexity} of the function class $\cH_v$. To define the Rademacher complexity, let $\bepsilon_i$ be independent, identically distributed random variables, independent of $\btheta_1,\ldots,\btheta_m$, such that $\bepsilon_i=1$ with probability $1/2$ and $\bepsilon_i=-1$ with probability $1/2$. Then
  $$
  \cR_m(\cH_v)=\bbE\left[\sup_{\|x\|_2\le R}\left|\frac{1}{m}\sum_{i=1}^m \bepsilon_i h(\btheta_i,x,v)\right|\right].
  $$

  Bounds on the Rademacher complexity can be found by fixing the data set $\btheta_1,\ldots,\btheta_m$, and then bounding the number of distinct vectors $[h(\btheta_1,x,v),\ldots,h(\btheta_m,x,v)]$ that can be obtained as $x$ ranges over its  domain:
  \begin{align}
    \nonumber
    \MoveEqLeft[0]
    \left|\left\{[h(\btheta_1,x,v),\ldots,h(\btheta_m,x,v)] \middle| \|x\|_2\le R\right\} \right| \\
    \nonumber
    &\le \left|\left\{[\indic(\balpha_1^\top x \ge \bt_1),\ldots, \indic(\balpha_m^\top x\ge \bt_m)] \middle| \|x\|_2\le R \right\} \right| \\
    \label{eq:linBound}
    &\le 
      \left|\left\{[\indic(\balpha_1^\top x \ge y\bt_1),\ldots, \indic(\balpha_m^\top x\ge y\bt_m)] \middle| \begin{bmatrix}x\\ y\end{bmatrix}\in\bbR^{n+1} \right\} \right|.
  \end{align}


  We will now bound the cardinality of the set on the right of (\ref{eq:linBound}) using a VC-dimension argument. For compact notation, set $z=[x^\top, y]^\top$. 
  Let $\cL$ be the class of functions $\ell(\cdot,z):\bbS^{n-1}\times [-R,R]\to \bbR$ defined by
  $$
  \ell(\theta,z)=yt-\alpha^\top x.
  $$
  Note that the functions in $\cL$ are in $1-1$ correspondence with $z\in\bbR^{n+1}$. 
  
  For a collection of points $\theta_{1:q}=(\theta_1,\ldots,\theta_q)$, define 
  \begin{multline*}
  \cL(\theta_{1:q}):=\\
  \left\{[\indic(\ell(\theta_1,z)\le 0),\ldots, \indic(\ell(\theta_q,z)\le 0)] \middle| z\in\bbR^{n+1} \right\}. 
\end{multline*}
Note that the right of (\ref{eq:linBound}) is precisely $|\cL(\btheta_{1:m})|$.

The class of functions, $\cL$, is said to \emph{shatter} a collection of points $\theta_{1:q}$ if $|\cL(\theta_{1:q})|=2^q$. The \emph{VC-dimension} (Vapnik-Chervonenkis), denoted by $\mathrm{VC}(\cL)$,  is the largest number $q$, such that there exists a collection of points of size $q$, $\theta_{1:q}$, such that  $\cL$ shatters $\theta_{1:q}$. 
The Vapnik, Chervonenkis, Sauer, Shelah Theorem states that for any $q\ge \mathrm{VC}(\cL)$:
$$
|\cL(\theta_{1:q})|\le (q+1)^{\mathrm{VC}(\cL)}.
$$


Now, since $\cL$ is a vector space of dimension $n+1$, Proposition 4.20 of \cite{wainwright2019high} shows that $\mathrm{VC}(\cL)\le n+1$. It follows, in particular, that $|\cL(\btheta_{1:m})|\le (m+1)^{n+1}$ when $m\ge n+1$.

Using that the set on the left of (\ref{eq:linBound}) must also be bounded by $(m+1)^{n+1}$,  Lemma 4.14 of \cite{wainwright2019high} shows that
$$
\cR_m(\cH_v)\le 4c \sqrt{\frac{(n+1)\log(m+1)}{m}}.
$$
Plugging the bound on the Rademacher complexity in (\ref{eq:rademacherToEmpMean}) shows that
\begin{align}
  \nonumber
  \xi(\btheta_{1:m},v)
  &\le
    \frac{c}{\sqrt{m}}\left(8\sqrt{(n+1)\log(m+1)}+\sqrt{2\log(\delta^{-1})}\right)\\
  \label{eq:empMeanBound}
  &=:\tau(m,\delta).
\end{align}
with probability at least $1-\delta$. The bounding function $\tau$ was defined for compact notation in the argument below.

Now, we complete the proof with a union bounding argument. First, we bound $\|\nabla \boldf_I-\nabla f\|_{\infty,L_{\infty}}$, which can be expressed as:
\begin{align*}
  \MoveEqLeft
  \|\nabla \boldf_I - \nabla f\|_{\infty,L_{\infty}}(\cB)\\
  &=\mathrm{esssup}_{\|x\|\le R} \|\nabla \boldf_I(x)-\nabla f(x)\|_{\infty} \\
                                                    &=\mathrm{esssup}_{\|x\|\le R}\max_{k=1,\ldots,n} \left|(\nabla \boldf_I(x)-\nabla f(x))_k \right| \\
                                                    &=\max_{k=1,\ldots,n}\xi(\btheta_{1:m},e_k).
\end{align*}
Here the  essential supremum is taken with respect to Lebesgue measure. (The calculations with $\nabla \boldf_I$ require using an essential supremum, since $\nabla \boldf_I(x)$ is defined for almost all, but not all $x$. However, a regular supremum was used in the definition of $\xi(\btheta_{1:m},v)$, since the corresponding term being maximized is well-defined for  all $x$)

The bound in (\ref{eq:empMeanBound}) holds for all unit vectors $v$ and arbitrary $\delta \in (0,1)$. In particular, let $e_1,\ldots,e_n$ denote the standard basis vectors of $\bbR^n$. Then for $k=1,\ldots,n$, we have that
$$
\xi(\btheta_{1:m},e_k)> \tau(m,\delta/n)
$$
with probability at most $\delta/n$. It follows that
\begin{align*}
  \MoveEqLeft
  \bbP\left(
  \|\nabla \boldf_I-\nabla f\|_{\infty,L_{\infty}(\cB)} > \tau(m,\delta/n)
  \right) \\
  &=
    \bbP\left(
   \max_{k=1,\ldots,n} \xi(\btheta_{1:m},e_k)> \tau(m,\delta/n)
    \right) \\
  &\le \sum_{k=1}^n \bbP\left(
    \xi(\btheta_{1:m},e_k)> \tau(m,\delta/n)
    \right) \\
  &\le \delta,
\end{align*}
where the first inequality is due to a union bound. 

The bound on
$$
\|\nabla \boldf_I-\nabla f\|_{2,L_{\infty}(\cB)}=\mathrm{esssup}_{x\in\cB}\|\nabla \boldf_I(x)-\nabla f(x)\|_2
$$
is immediate, since $\|w\|_2\le \sqrt{n}\|w\|_{\infty}$ for any $w\in\bbR^n$. 
\end{proof}

\subsection{Proof of Theorem~\ref{thm:main}}
\label{ss:pf}

Let $a$ and $b$ be the vector and scalar guaranteed from Lemma~\ref{lem:integral} and set $\bc_i=\frac{g(\balpha_i,\bt_i)}{mP(\balpha_i,\bt_i)}$. Then $\boldf_N=\boldf_I$, where $\boldf_I$ was defined in (\ref{eq:importance}). 

Fix $\delta \in (0,1)$. 
Theorem 1 of \cite{lamperski2024approximation} shows that (\ref{eq:fApprox}) holds with probability at least
$1-(\delta/2)$. Furthermore, Theorem 1 of \cite{lamperski2024approximation} gives
the bounds on $\|a\|_2$, $|b|$, and $|\bc_i|$.

Lemma~\ref{lem:gradApprox} implies that (\ref{eq:gradApprox}) holds with probability at least $1-(\delta/2)$.

Now, a union bound implies that the event that either (or both) of (\ref{eq:fApprox}) or (\ref{eq:gradApprox}) fails occurs with probability at most $\delta$. Thus, \eqref{eq:fApprox} and \eqref{eq:gradApprox} must hold simultaneously with probability at least $1-\delta$. 
\hfill\QED

\section{Application to Policy Evaluation}
\label{sec:pe}

\subsection{General Theory}
Consider a control-affine dynamical system in continuous time:
$$
\frac{dx_t}{dt}=f(x_t)+g(x_t)u_t
$$
with infinite-horizon cost:
$$
\int_0^{\infty}\left(q(x_t)+\frac{1}{2}u_t^\top R u_t\right)dt.
$$
(Here $R$ is a matrix, and does not correspond to the bound from the previous section.)

If the inputs follow a fixed policy, $u_t = \phi(x_t)$, the \emph{value} function for policy $\phi$ is given by:
$$
V_{\phi}(x_0)=\int_0^{\infty}\left(q(x_t)+\frac{1}{2}\phi(x_t)^\top R \phi(x_t)\right)dt.
$$
and satisfies the PDE:
$$
q(x)+\frac{1}{2}\phi(x)^\top R \phi(x)+\frac{\partial V_{\phi}(x)}{\partial x}(f(x)+g(x)\phi(x))=0.
$$

\emph{Policy evaluation} is the computation of $V_{\phi}$, and is one of the primary calculations in the dynamic programming method known as policy iteration, and related actor critic methods from reinforcement learning.

In most problems in optimal control and reinforcement learning for nonlinear systems, $V_{\phi}$ cannot be computed  exactly. (In learning contexts, $f$ and $g$ are  typically unknown.) As described in the introduction, a common approach is to utilize a neural network approximation of the form $\Theta \Phi(Wx+b)$, where $(W,b)$ are fixed in advance, and then compute $\Theta$ so that $V_{\phi}(x)\approx \Theta\Phi(Wx+b)$.

The books \cite{vrabie2013optimal,kamalapurkar2018reinforcement} describe various well-known methods to perform approximate policy iteration with neural networks. To get guarantees on the approximation, they assume that over a bounded region, $\cB$:
\begin{multline}
  \label{eq:jointError}
  \inf_{\Theta}\sup_{x\in\cB}\max\left\{|V_{\phi}(x)-\Theta \Phi(Wx+b)|,\right.\\
  \left.\|\nabla_x V_{\phi}(x)-\nabla_x(\Theta\Phi(Wx+b))\|_2\right\} \le \epsilon,
\end{multline}
where $\epsilon$ is a prescribed error tolerance.

These assumptions are partially justified using the higher-order Weierstrauss approximation theorem, in the sense that suitable $(W,b)$ are guaranteed to \emph{exist}. But since there has been no provably correct method to find such $(W,b)$, the justification is incomplete.

Now we will sketch how Theorem~\ref{thm:main} can be used to construct randomized $(\bW,\bb)$ and an appropriate vector of functions $\Phi$ that satisfy the approximation requirements with high probability. Let $m\ge (n+1)$ be a number of neurons. Assume that $V_{\phi}$ satisfies Assumption~\ref{as:smoothness}. 
Use $V_{\phi}$ in place of $f$ in Theorem~\ref{thm:main}. Let $(\balpha_i,\bt_i)$ be independent, identically distributed samples drawn according to $P$. Let $a$, $b$, and $\bc_i$ be the coefficients guaranteed to exist by Theorem~\ref{thm:main}.

Set:
\begin{align*}
  \bW&=\begin{bmatrix}
    0 \\
    I \\
    \balpha_1 \\
    \vdots \\
    \balpha_m
  \end{bmatrix}
  & \bb&=\begin{bmatrix}
    1 \\
    0_{n\times 1} \\
    -\bt_1 \\
    \vdots \\
    -\bt_m
  \end{bmatrix}
  \\
  \Phi(\bW x+\bb)&=\begin{bmatrix}
    1 \\
    x \\
    \sigma(\balpha_1^\top x-\bt_1) \\
    \vdots \\
    \sigma(\balpha_m^\top x-\bt_m)
  \end{bmatrix}
     &
       \bTheta^\top &=\begin{bmatrix}
         b \\
         a \\
         \bc_1 \\
         \vdots \\
         \bc_m
       \end{bmatrix}
\end{align*}

Theorem~\ref{thm:main} implies that there is a constant $c>0$ such that for all $\delta \in (0,1)$, the following bound holds with probability at least $1-\delta$:
\begin{multline}
  \label{eq:PEBound}
  \sup_{x\in\cB}\max\left\{|V_{\phi}(x)-\bTheta \Phi(\bW x+\bb)|,
  \right.\\
  \left.
    \|\nabla_x V_{\phi}(x)-\nabla_x(\bTheta\Phi(\bW x+\bb))\|_2\right\} \\\le c\frac{\sqrt{\log(\delta^{-1})}+\sqrt{\log(m)}}{\sqrt{m}}.
\end{multline}

In particular, when $m$ is sufficiently large, the bound is below $\epsilon$, and so (\ref{eq:jointError}) holds with high probability. Thus, in principle, Theorem~\ref{thm:main} guarantees that randomly generated input parameters suffice to achieve the required approximation properties. In practice, however, the problem is not completely solved, since the constant $c$ is so large that a massive number of neurons is required before the bound on the right of (\ref{eq:PEBound}) drops to a reasonable level. See the numerical example below.

\subsection{Numerical Example}

To test the predictions from Theorem~\ref{thm:main} in a policy evaluation problem, we consider a simple nonlinear problem with a scalar state in which $V_{\phi}$ can be computed analytically:
\begin{align*}
  f(x)&=0\\
  g(x)&=1\\
  \phi(x)&=-\tanh(5x)\\
  q(x)&=\frac{1}{2}x^2\\
  R&=1
\end{align*}
This results in:
$$
V_{\phi}(x)=\frac{1}{5}\log(\cosh(5x)).
$$

Our goal is to determine the error in approximating $V_{\phi}$ and $\nabla_x V_{\phi}$ over the region $[-1,1]$. 

To calculate the error bounds, we need a bound on the smoothness coefficient, $\rho$, which in turn, requires bounding Fourier transforms. Since we only focus on approximating $V_{\phi}$ in the region $[-1,1]$, we examine, instead, $V_{\phi}(x)r(x)$, where the multiplier function, $r(x)$, takes the value $1$ over the region $[-1,1]$, and has a corresponding bounded $\rho$. See Fig.~\ref{fig:Vmod}. In our experiments, we defined $r$ through its Fourier transform, with
$$
\hat r(\omega) = 3\left(\frac{\sin\left(\frac{\pi \omega}{5}\right)}{\frac{\pi \omega}{5}}\right)^5 \frac{\sin\left(3\pi\omega  \right)}{3\pi \omega}.
$$
In this case, $r(x)$ is differentiable up to $5$th order, with $r(x)=1$ over $[-1,1]$ and $r(x)=0$ outside $[-2,2]$.  
Numerical integration gives a bound of $\rho \le 2$, with $k=n+3=4$. 

\begin{figure}
  \centering
  \includegraphics[width=.8\columnwidth]{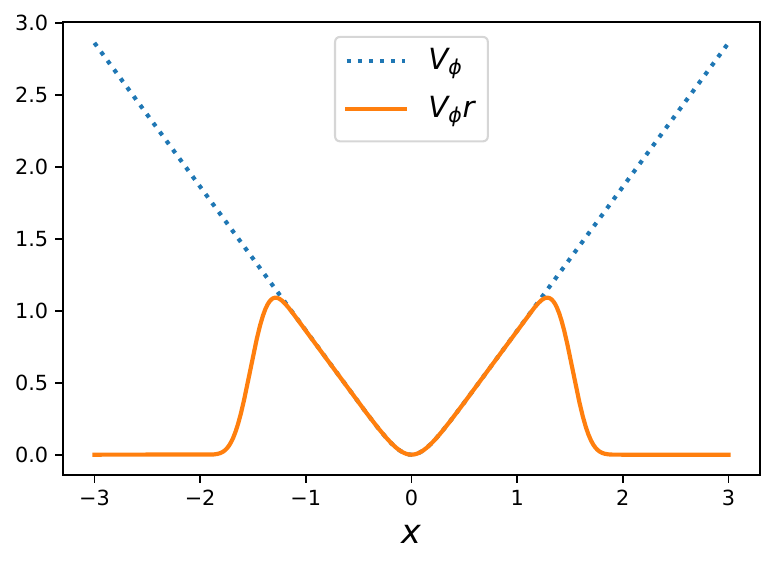}
  \caption{\label{fig:Vmod} {\bf $V_\phi$ and its modified version.}}
\end{figure}

Theorem~\ref{thm:main} gives a bound on the approximation errors achieved by the importance sampling estimate, from (\ref{eq:importance}), which in turn, gives an upper bound on the errors that could be achieved by optimization. In a numerical experiment, we optimized the coefficients via least squares to give the errors shown in Figs.~\ref{fig:Verr} and \ref{fig:GradErr}. As can be seen, the bounds are quite conservative. 

\begin{figure}
  \centering
  \includegraphics[width=.8\columnwidth]{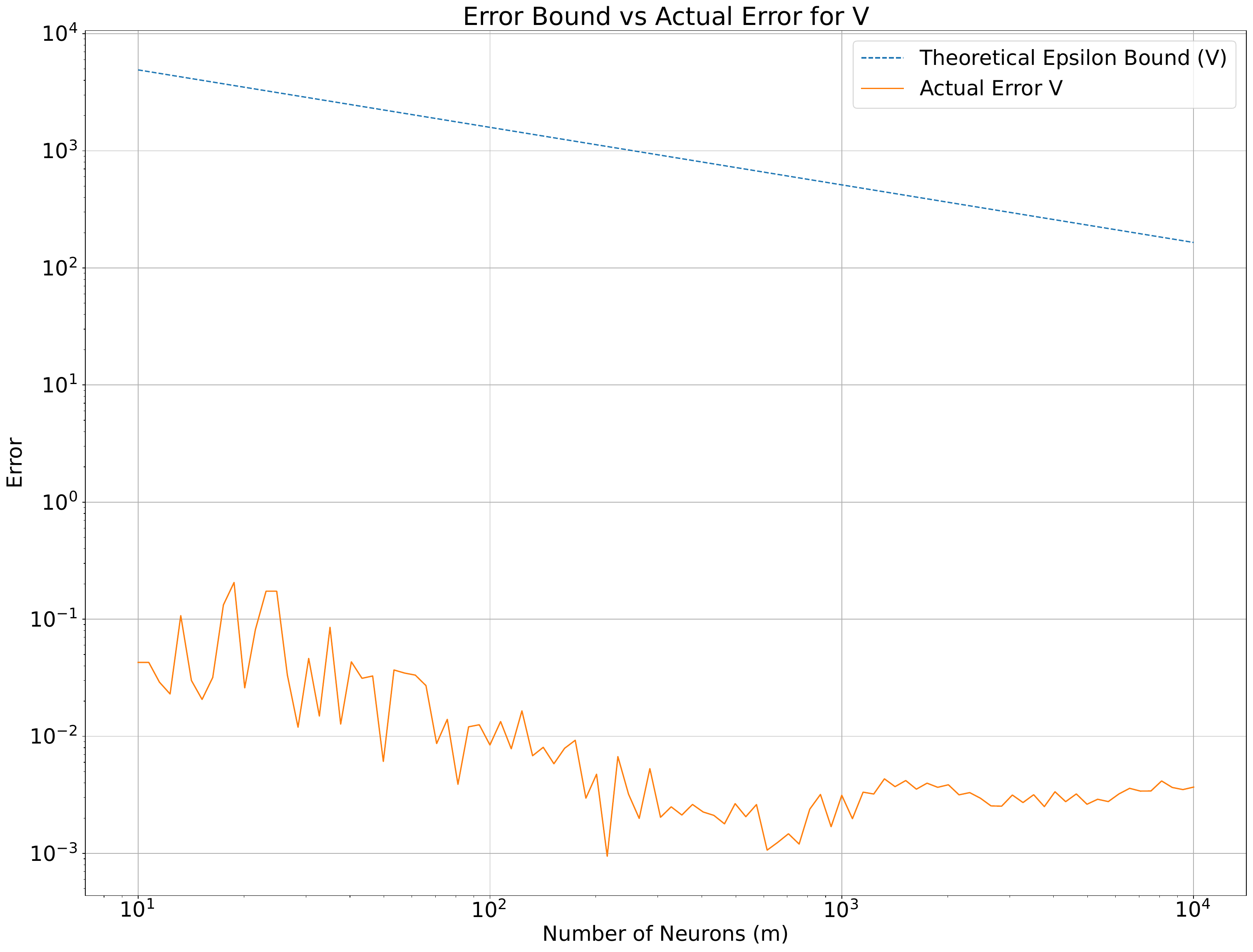}
  \caption{\label{fig:Verr} {\bf Function Approximation Error}}
\end{figure}

\begin{figure}
  \centering
  \includegraphics[width=.8\columnwidth]{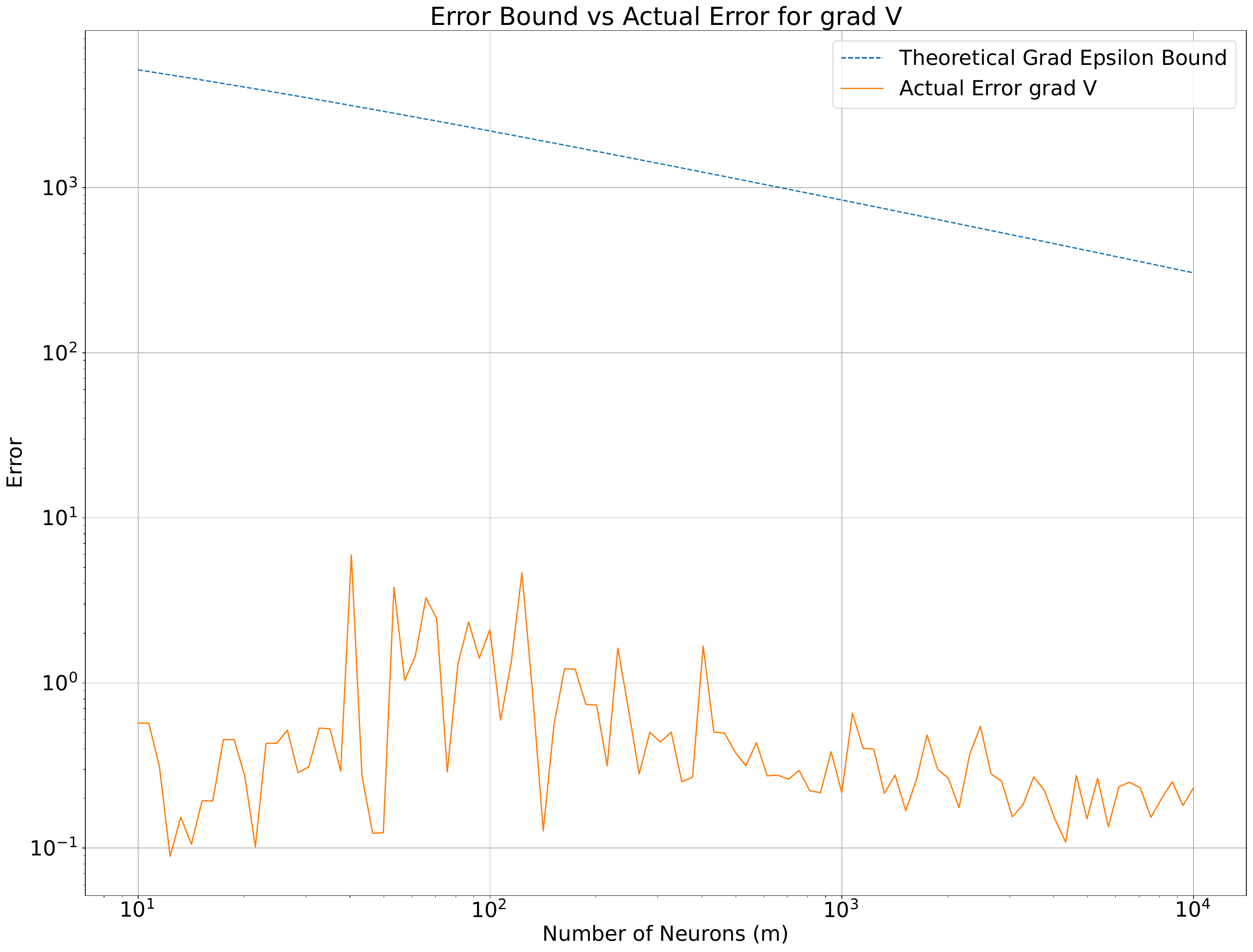}
  \caption{\label{fig:GradErr} {\bf Gradient Approximation Error}}
\end{figure}



\section{Conclusion}
\label{sec:conclusion}

This paper gives error bounds on both the function and gradient, when a smooth function is approximated by shallow ReLU network with randomly generated input parameters. We showed that both errors decrease like $O\left((\log(m)/m)^{1/2}\right)$, where $m$ is the number of neurons. This approximation result was motivated by policy evaluation algorithms used in dynamic programming and reinforcement learning, which require both function and gradient approximation bounds. Previously, such bounds had been assumed without proof. We showed how our results can, in principle, give rigorous guarantees that the bounds hold. However, the constant factors are currently large, which limits the practical applicability. Future work will involve improving the constant factors.







\printbibliography

\if\ARXIV1
\newpage
\appendices
\onecolumn
\fi

\end{document}